\def\eqref#1{equation~\ref{#1}}
\def\1{\bm{1}}
\DeclareMathAlphabet{\mathsfit}{\encodingdefault}{\sfdefault}{m}{sl}
\SetMathAlphabet{\mathsfit}{bold}{\encodingdefault}{\sfdefault}{bx}{n}
\newcommand{\sigmoid}{\sigma}
\newtheorem{theorem}{Theorem}
\newcommand{\modelname}{{NECST}}
\icmltitlerunning{Neural Joint Source-Channel Coding}
\begin{document}

\twocolumn[
\icmltitle{Neural Joint Source-Channel Coding}

% It is OKAY to include author information, even for blind
% submissions: the style file will automatically remove it for you
% unless you've provided the [accepted] option to the icml2019
% package.

% List of affiliations: The first argument should be a (short)
% identifier you will use later to specify author affiliations
% Academic affiliations should list Department, University, City, Region, Country
% Industry affiliations should list Company, City, Region, Country

% You can specify symbols, otherwise they are numbered in order.
% Ideally, you should not use this facility. Affiliations will be numbered
% in order of appearance and this is the preferred way.

\icmlsetsymbol{equal}{*}

\begin{icmlauthorlist}
\icmlauthor{Kristy Choi}{cs}
\icmlauthor{Kedar Tatwawadi}{ee}
\icmlauthor{Aditya Grover}{cs}
\icmlauthor{Tsachy Weissman}{ee}
\icmlauthor{Stefano Ermon}{cs}
\end{icmlauthorlist}

\icmlaffiliation{cs}{Department of Computer Science, Stanford University}
\icmlaffiliation{ee}{Department of Electrical Engineering, Stanford University}

\icmlcorrespondingauthor{Kristy Choi}{kechoi@cs.stanford.edu}

% You may provide any keywords that you
% find helpful for describing your paper; these are used to populate
% the "keywords" metadata in the PDF but will not be shown in the document
\icmlkeywords{Machine Learning, ICML}

\vskip 0.3in
]

% this must go after the closing bracket ] following \twocolumn[ ...

% This command actually creates the footnote in the first column
% listing the affiliations and the copyright notice.
% The command takes one argument, which is text to display at the start of the footnote.
% The \icmlEqualContribution command is standard text for equal contribution.
% Remove it (just {}) if you do not need this facility.

\printAffiliationsAndNotice{}  % leave blank if no need to mention equal contribution

\begin{abstract}
For reliable transmission across a noisy communication channel, classical results from information theory show that it is asymptotically optimal to separate out the source and channel coding processes. However, this decomposition can fall short in the finite bit-length regime, as it requires non-trivial tuning of hand-crafted codes and assumes infinite computational power for decoding. In this work, we propose to jointly learn the encoding and decoding processes using a new discrete variational autoencoder model. By adding noise into the latent codes to simulate the channel during training, we learn to both compress and error-correct given a fixed bit-length and computational budget. We obtain codes that are not only competitive against several separation schemes, but also learn useful robust representations of the data for downstream tasks such as classification. Finally, inference amortization yields an extremely fast neural decoder, almost an order of magnitude faster compared to standard decoding methods based on iterative belief propagation. 
\end{abstract}

\section{Introduction}
We consider the problem of encoding images as bit-strings so that they can be reliably transmitted across a noisy communication channel. Classical results from \citet{shannon1948mathematical} show that for a memoryless communication channel, as the image size goes to infinity, it is optimal to separately: 1) compress the images as much as possible to remove redundant information (source coding) and 2) use an error-correcting code to re-introduce redundancy, which allows for reconstruction in the presence of noise (channel coding). This \emph{separation theorem} has been studied in a wide variety of contexts and has also enjoyed great success in the development of new coding algorithms.

However, this elegant decomposition suffers from two critical limitations in the finite bit-length regime. First, without an infinite number of bits for transmission, the overall distortion (reconstruction quality) is a function of both source and channel coding errors. Thus optimizing for both: (1) the number of bits to allocate to each process and (2) the code designs themselves is a difficult task. Second, maximum-likelihood decoding is in general NP-hard (\citet{berlekamp1978inherent}). Thus obtaining the accuracy that is possible \emph{in theory} is contingent on the availability of infinite computational power for decoding, which is impractical for real-world systems. Though several lines of work have explored various relaxations of this problem to achieve tractability (\citet{koetter2003graph}, \citet{feldman2005using},  \citet{vontobel2007low}), many decoding systems rely on heuristics and early stopping of iterative approaches for scalability, which can be highly suboptimal.
 
To address these challenges we propose Neural Error Correcting and Source Trimming (\modelname) codes, a deep learning framework for jointly learning to compress and error-correct an input image given a fixed bit-length budget. Three key steps are required. First, we use neural networks to encode each image into a suitable bit-string representation, sidestepping the need to rely on hand-designed coding schemes that require additional tuning for good performance. Second, we simulate a discrete channel \textit{within} the model and inject noise directly into the latent codes to enforce robustness. Third, we amortize the decoding process such that after training, we obtain an extremely fast decoder at test time.
% can scale our model to massively large datasets. \s{we havent really talked about any dataset..it's more like it can be done very quickly at test time}
These components pose significant optimization challenges due to the inherent non-differentiability of discrete latent random variables. We overcome this issue by leveraging recent advances in unbiased low-variance gradient estimation for variational learning of discrete latent variable models, and train the model using a variational lower bound on the mutual information between the images and their binary representations to obtain good, \emph{robust} codes. At its core, \modelname{}  can also be seen as an implicit deep generative model of the data derived from the perspective of the joint-source channel coding problem (\citet{bengio2014deep}, \citet{sohl2015deep}, \citet{goyal2017variational}). 
% \s{Add a bunch of citations here, to make possible reviewers happy}

In experiments, we test \modelname{} on several grayscale and RGB image datasets, obtaining improvements over industry-standard compression (e.g, WebP \citep{webp}) and error-correcting codes (e.g., low density parity check codes). We also learn an extremely fast neural decoder, yielding almost an order of magnitude (two orders for magnitude for GPU) in speedup compared to standard decoding methods based on iterative belief propagation (\citet{fossorier1999reduced}, \citet{chen2002near}). Additionally, we show that in the process we learn discrete representations of the input images that are useful for downstream tasks such as classification. 

In summary, our contributions are: 1) a novel framework for the JSCC problem and its connection to generative modeling with discrete latent variables; 2) a method for robust representation learning with latent additive discrete noise.
\begin{figure}[h]
    \centering
    \includegraphics[width=0.75\linewidth]{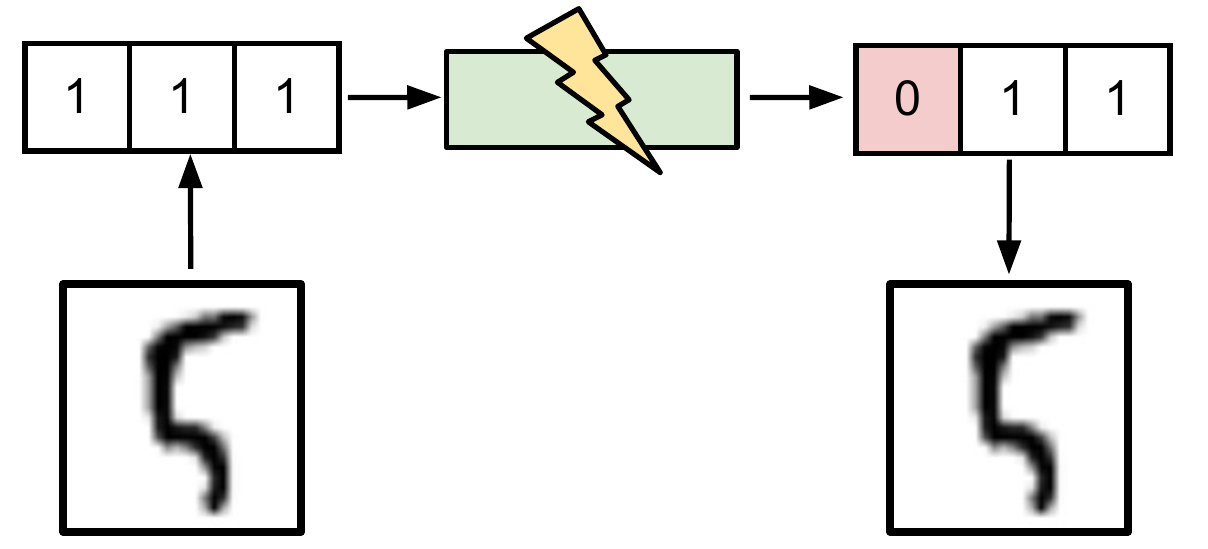}
    \caption{Illustration of \modelname{}. An image is encoded into a binary bit-string, where it is corrupted by a discrete channel that is simulated \textit{within} the model. Injecting noise into the latent space encourages \modelname{} to learn robust representations and also turns it into an implicit generative model of the data distribution.}
    \label{fig:intuition}
\end{figure}
\section{Source and Channel Coding}

\subsection{Preliminaries}
We consider the problem of reliably communicating data across a noisy channel using a system that can detect and correct errors. Let $\cal{X}$ be a space of possible inputs (e.g., images), and $p_{\textrm{data}}(x)$ be a \emph{source} distribution defined over $x \in \mathcal{X}$. The goal of the communication system is to encode samples $x \sim p_{\textrm{data}}(x)$ as messages in a codeword space $\widehat{\mathcal{Y}} = \{0,1\}^m$. The messages are transmitted over a noisy channel, where they are corrupted to become \emph{noisy} codewords in $\mathcal{Y}$. Finally, a decoder produces a reconstruction $\hat{x} \in \cal{X}$  of the original input $x$ from a received \emph{noisy} code $y$. 

The goal is to minimize the overall \emph{distortion} (reconstruction error) $\Vert x-\hat{x} \Vert$ in $\ell_1$ or $\ell_2$ norm while keeping the message length $m$ short (\emph{rate}). In the absence of channel noise, low reconstruction errors can be achieved using a compression scheme to encode inputs $x \sim p_{\textrm{data}}(x)$ as succinctly as possible (e.g., WebP/JPEG). In the presence of noise, longer messages are typically needed to redundantly encode the information and recover from errors (e.g., parity check bits). 

\subsection{The Joint Source-Channel Coding Problem}
Based on \citet{shannon1948mathematical}'s fundamental results, existing systems adhere to the following pipeline. A \textit{source encoder} compresses the source image into a bit-string with the minimum number of bits possible. A \textit{channel encoder} re-introduces redundancies, part of which were lost during source coding to prepare the code $\hat{y}$ for transmission. The \textit{decoder} then leverages the channel code to infer the original signal $\hat{y}$, producing an approximate reconstruction $\hat{x}$.

In the separation theorem, Shannon proved that the above scheme is optimal in the limit of infinitely long messages. That is, we can minimize distortion by optimizing the source and channel coding processes independently. However, given a finite bit-length budget, the relationship between rate and distortion becomes more complex. The more bits that we reserve for compression, the fewer bits we have remaining to construct the best error-correcting code and vice versa. Balancing the two sources of distortion through the optimal bit allocation, in addition to designing the best source and channel codes, makes this \emph{joint source-channel coding} (JSCC) problem challenging.

In addition to these design challenges, real-world communication systems face computational and memory constraints that hinder the straightforward application of Shannon's results. Specifically, practical decoding algorithms rely on approximations that yield suboptimal reconstruction accuracies. The information theory community has studied this problem extensively, and proposed different bounds for finite bit-length JSCC in a wide variety of contexts (\citet{pilc1967coding}, \citet{csiszar1982linear}, \citet{kostina2013lossy}). Additionally, the issue of packet loss and queuing delay in wireless video communications suggest that modeling the forward-error correction (FEC) process may prove beneficial in improving communication systems over digital packet networks (\citet{zhai2007joint}, \citet{fouladi2018salsify}). Thus rather than relying on hand-crafted coding schemes that may require additional tuning in practice, we propose to \emph{learn} the appropriate bit allocations and coding mechanisms using a flexible deep learning framework.
\section{Neural Source and Channel Codes} 
Given the above considerations, we explore a learning-based approach to the JSCC problem. To do so, we consider a flexible \textit{class} of codes parameterized by a  neural network and jointly optimize for the encoding and decoding procedure. This approach is inspired by recent successes in training (discrete) latent variable generative models (\citet{neal1992connectionist}, \citet{rolfe2016discrete}, \citet{van2017neural}) as well as discrete representation learning in general (\citet{hu2017learning}, \citet{kaiser2018discrete}, \citet{roy2018theory}). We also explore the connection to different types of autoencoders in section \ref{sec:generative}.

\subsection{Coding Process}
Let $X,\hat{Y},Y,\hat{X}$ be random variables denoting the inputs, codewords, noisy codewords, and reconstructions respectively. We model their joint distribution  $p(x , \hat{y}, y, \hat{x})$
using the following graphical model
$X \rightarrow \hat{Y} \rightarrow Y \rightarrow \hat{X}$ as:
\begin{align}
% \begin{equation}
% \begin{align*}
&p(x , \hat{y}, y, \hat{x}) = \nonumber \\
&p_{\textrm{data}}(x) p_{\textrm{enc}}(\widehat{y}|x; \phi) p_{\textrm{channel}}(y|\widehat{y};\epsilon) p_{\textrm{dec}}(\hat{x}|y; \theta) \label{main:joint} 
\end{align}
% \end{equation}
% \end{align*}

In Eq. \ref{main:joint}, $p_{\textrm{data}}(x)$ denotes the distribution over inputs $\mathcal{X}$. It is not known explicitly in practice, and only accessible through samples. $p_{\text{channel}}(y|\widehat{y}; \epsilon)$ is the channel model, where we specifically focus on two widely used discrete channel models: (1) the binary erasure channel (BEC); and (2) the binary symmetric channel (BSC). 

The BEC erases each bit $\hat{y}_i$ into a corrupted symbol $?$ (e.g., $0 \rightarrow ?$) with some i.i.d probability $\epsilon$, but faithfully transmits the correct bit otherwise. Therefore, $\hat{Y}$ takes values in $\hat{\mathcal{Y}} = \{ 0, 1, ?\}^m$, and $p_{\text{BEC}}$ can be described by the following transition matrix for each bit:
\[
P_{\text{BEC}} = \begin{bmatrix}
1 - \epsilon & 0 \\
0 & 1 - \epsilon \\
\epsilon & \epsilon \\
\end{bmatrix}
\]  
where the first column denotes the transition dynamics for $y_i = 0$ and the second column is that of $y_i = 1$. Each element per column denotes the probability of the bit remaining the same, flipping to a different bit, or being erased. 

The BSC, on the other hand, independently flips each bit in the codeword with probability $\epsilon$ (e.g., $0 \rightarrow 1$). Therefore, $\hat{Y}$ takes values in $\hat{\mathcal{Y}} =\mathcal{Y} = \{ 0, 1 \}^m$ and
\[
p_{\text{BSC}}(y|\widehat{y}; \epsilon) = \\
\prod_{i=1}^m \epsilon^{y_i \oplus \widehat{y}_i} (1-\epsilon)^{y_i \oplus \widehat{y}_i \oplus 1}
\]
where $\oplus$ denotes addition modulo $2$ (i.e., an eXclusive OR). All experiments in this paper are conducted with the BSC channel. We note that the BSC is a more difficult communication channel to work with than the BEC (\citet{richardson2008modern}). 
% \s{is there a reference for this?}

A \emph{stochastic} encoder $q_{\text{enc}}(\hat{y}|x; \phi)$  generates a codeword $\hat{y}$ given an input $x$. Specifically, we model each bit $\hat{y}_i$ in the code with an independent Bernoulli random vector. We model the parameters of this Bernoulli with a neural network $f_\phi(\cdot)$ (an MLP or a CNN) parameterized by $\phi$. Denoting $\sigmoid(z)$ as the sigmoid function:
\[
q_{\text{enc}}(\hat{y}|x,\phi) = \prod_{i=1}^m \sigmoid(f_\phi(x_i))^{\hat{y}_i} (1 - \sigmoid(f_\phi(x_i)))^{(1-\hat{y}_i)}
\]
Similarly, we posit a probabilistic decoder $p_{\textrm{dec}}(\hat{x}|y; \theta)$ parameterized by an MLP/CNN that, given $y$, generates a decoded image $\hat{x}$. For real-valued images, we model each pixel as a factorized Gaussian with a fixed, isotropic covariance to yield: $\hat{x}|y \sim \mathcal{N}(f_\theta(y), \sigma^2I)$, where $f_\theta(\cdot)$ denotes the decoder network. For binary images, we use a Bernoulli decoder: $\hat{x}|y \sim \text{Bern}(f_\theta(y))$.
\section{Variational Learning of Neural Codes}

To learn an effective coding scheme, we \textbf{maximize the mutual information} between the input $X$ and the corresponding \emph{noisy} codeword $Y$ (\citet{barber2006kernelized}). That is, the code $\hat{y}$ should be robust to partial corruption; even its noisy instantiation $y$ should preserve as much information about the original input $x$ as possible (\citet{mackay2003information}). 

First, we note that we can analytically compute the encoding distribution $q_{\text{noisy\_enc}}(y|x;\epsilon, \phi)$ \emph{after} it has been perturbed by the channel by marginalizing over $\hat{y}$:
\[
q_{\text{noisy\_enc}}(y|x; \epsilon, \phi) = 
\sum_{\hat{y} \in \hat{\cal{Y}}} q_{\text{enc}}(\widehat{y}|x; \phi) p_{\textrm{channel}}(y|\widehat{y}; \epsilon)
\]
The BEC induces a 3-way Categorical distribution for $q_{\text{noisy\_enc}}$ where the third category refers to bit-erasure:
\[
\begin{split}
y \vert x \sim \textrm{Cat}(1 - \epsilon - \sigmoid(f_\phi(x)) + \sigmoid(f_\phi(x)) \cdot \epsilon, \\
\sigmoid(f_\phi(x)) - \sigmoid(f_\phi(x)) \cdot \epsilon, \epsilon)
\end{split}
\]
while for the BSC, it becomes:
\[
\begin{split}
q_{\text{noisy\_enc}}(y|x;\phi,\epsilon) = \prod_{i=1}^m
\left(\sigmoid(f_\phi(x_i)) - 2 \sigmoid(f_\phi(x_i)) \epsilon + \epsilon \right)^{y_i} \\
\left(1 - \sigmoid(f_\phi(x_i)) + 2 \sigmoid(f_\phi(x_i)) \epsilon - \epsilon \right)^{(1-y_i)}
\end{split}
\]

We note that although we have outlined two specific channel models, it is relatively straightforward for our framework to handle other types of channel noise.

Finally, we get the following optimization problem:
% \begin{equation}
\begin{align}
    &\max_{\phi} I(X,Y;\phi,\epsilon) = H(X) - H(X|Y; \phi,\epsilon) \nonumber \\
     %= -H(X|Y;\phi) + \textrm{const.}\\
      &= \mathbb{E}_{x \sim p_{\textrm{data}}(x)} \mathbb{E}_{y \sim q_{\text{noisy\_enc}}(y \mid x;\epsilon, \phi)} \left[\log p(x|y;\epsilon, \phi ) \right] + \textrm{const.}  \nonumber \\
     &\geq \mathbb{E}_{x \sim p_{\textrm{data}}(x)} \mathbb{E}_{y \sim q_{\text{noisy\_enc}}(y \mid x;\epsilon, \phi)} \left[\log p_{\textrm{dec}}(x|y; \theta) \right]+ \textrm{const.} \label{eq:opt_obj}
\end{align}
% \end{equation}
where $p(x|y;\epsilon, \phi )$ is the true (and intractable) posterior from Eq.~\ref{main:joint} and $p_{\textrm{dec}}(x|y; \theta)$ is an amortized variational approximation. The true posterior $p(x|y;\epsilon, \phi)$ --- the posterior probability over possible inputs $x$ given the received noisy codeword $y$ --- is the best possible decoder. However, it is also often intractable to evaluate and optimize.
%, rendering \ref{eq:opt_obj} difficult to optimize. 
We therefore use a \emph{tractable} variational approximation $p_{\textrm{dec}}(\hat{x}|y; \theta)$. Crucially, this variational approximation is \emph{amortized} (\citet{kingma2013auto}), and is the inference distribution that will actually be used for decoding at test time. Because of amortization, \emph{decoding is guaranteed to be efficient}, in contrast with existing error correcting codes, which typically involve NP-hard MPE inference queries.

Given any encoder ($\phi$), we can find the best amortized variational approximation by maximizing the lower bound (Eq.~\ref{eq:opt_obj}) as a function of $\theta$.
%our decoder will likely be suboptimal
Therefore, the \modelname{}  training objective is given by:
\[
\max_{\theta, \phi} \mathbb{E}_{x \sim p_{\textrm{data}}(x)} \mathbb{E}_{y \sim q_{\text{noisy\_enc}}(y \mid x;\epsilon, \phi)}
\left[\log p_{\textrm{dec}}(x|y; \theta) \right]
\]
In practice, we approximate the expectation of the data distribution $p_{\text{data}}(x)$ with a finite dataset $\cal{D}$:
\begin{equation}
\label{main:objective}
    \max_{\theta, \phi} \sum_{x \in \cal{D}} \mathbb{E}_{y \sim q_{\text{noisy\_enc}}(y \mid x;\epsilon, \phi)}
    \left[\log p_{\textrm{dec}}(x|y;\theta) \right] \equiv \mathcal{L}(\phi, \theta; x, \epsilon)
\end{equation}
Thus we can jointly learn the encoding and decoding scheme by optimizing the parameters $\phi, \theta$. 
In this way, the encoder ($\phi$) is ``aware" of the computational limitations of the decoder ($\theta$), and is optimized accordingly~\cite{shu2018amortized}. However, a main learning challenge is that we are not able to backpropagate directly through the \emph{discrete} latent variable $y$; we elaborate upon this point further in Section \ref{sec:discreteopt}.

For the BSC, the solution will depend on the number of available bits $m$, the noise level $\epsilon$, and the structure of the input data $x$. For intuition, consider the case $m=0$. In this case, no information can be transmitted, and the best decoder will fit a single Gaussian to the data. When $m=1$ and $\epsilon=0$ (noiseless case), the model will learn a mixture of $2^m=2$ Gaussians. However, if $m=1$ and $\epsilon=0.5$, again no information can be transmitted, and the best decoder will fit a single Gaussian to the data. Adding noise forces the model to decide how it should effectively partition the data such that (1) similar items are grouped together in the same cluster; and (2) the clusters are "well-separated" (even when a few bits are flipped, they do not "cross over" to a different cluster that will be decoded incorrectly). Thus, from the perspective of unsupervised learning, \modelname{} attempts to learn \emph{robust binary representations} of the data.

\subsection{\modelname{} as a Generative Model}
\label{sec:generative}
The objective function in Eq.~\ref{main:objective} closely resembles those commonly used in generative modeling frameworks; in fact, \modelname{} can also be viewed as a generative model. In its simplest form, our model with a noiseless channel ($\text{i.e., } \epsilon = 0$), deterministic encoder, and deterministic decoder is identical to a traditional autoencoder (\citet{bengio2007scaling}). 
Once channel noise is present ($\epsilon > 0$) and the encoder/decoder become probabilistic, \modelname{} begins to more closely resemble other variational autoencoding frameworks. Specifically, \modelname{} is similar to Denoising Autoencoders (DAE) (\citet{vincent2008extracting}), except that it is explicitly trained for robustness to partial destruction of \textit{the latent space}, as opposed to the \emph{input} space. The stacked DAE \citep{vincent2010stacked} also denoises the latent representations, but does not explicitly inject noise into each layer.
% --------------
% putting figure here for space purposes
\begin{figure*}[h]
\centering     %%% not \center
\subfigure[CIFAR10]{\label{fig:b}\includegraphics[width=0.33\textwidth]{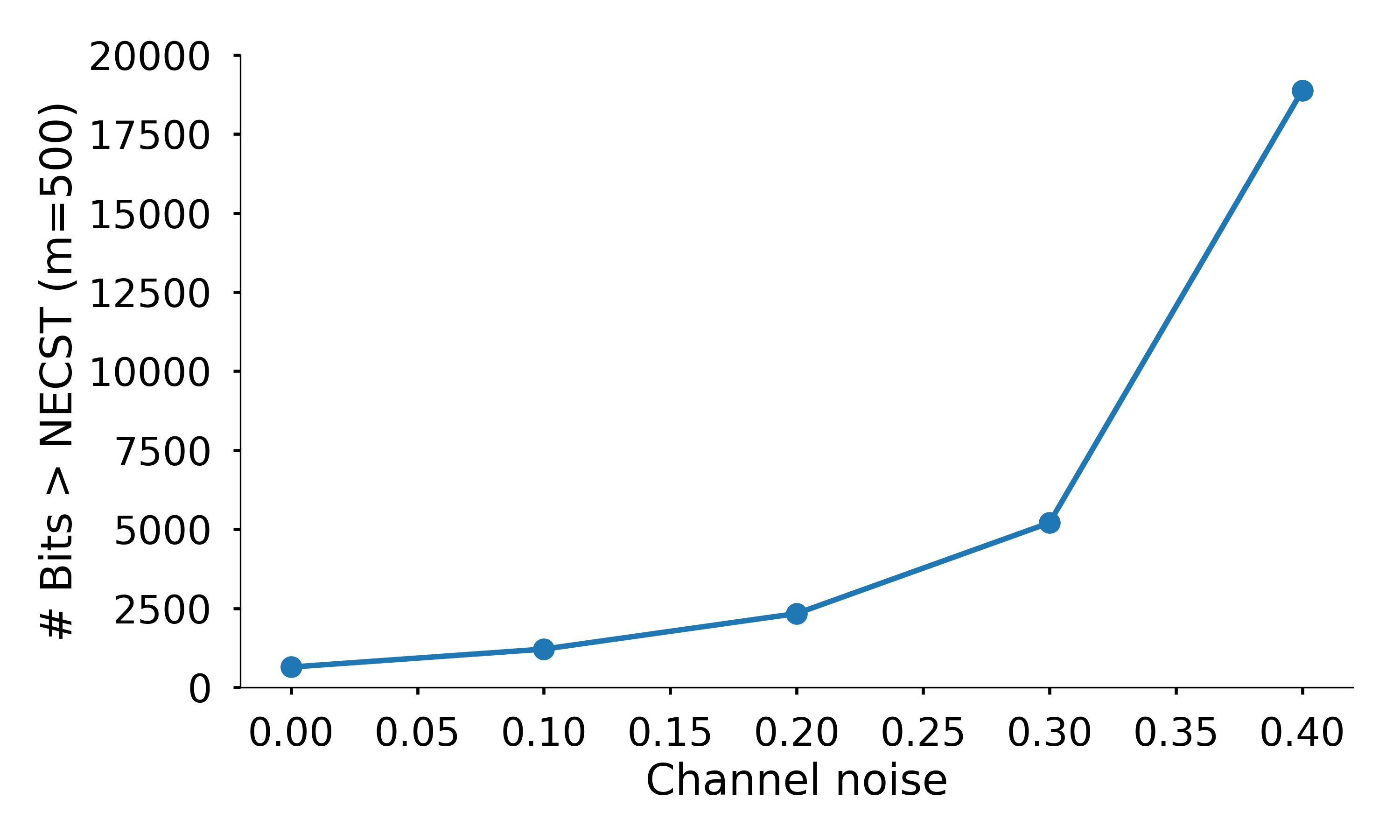}}
\subfigure[CelebA]{\label{fig:b}\includegraphics[width=0.33\textwidth]{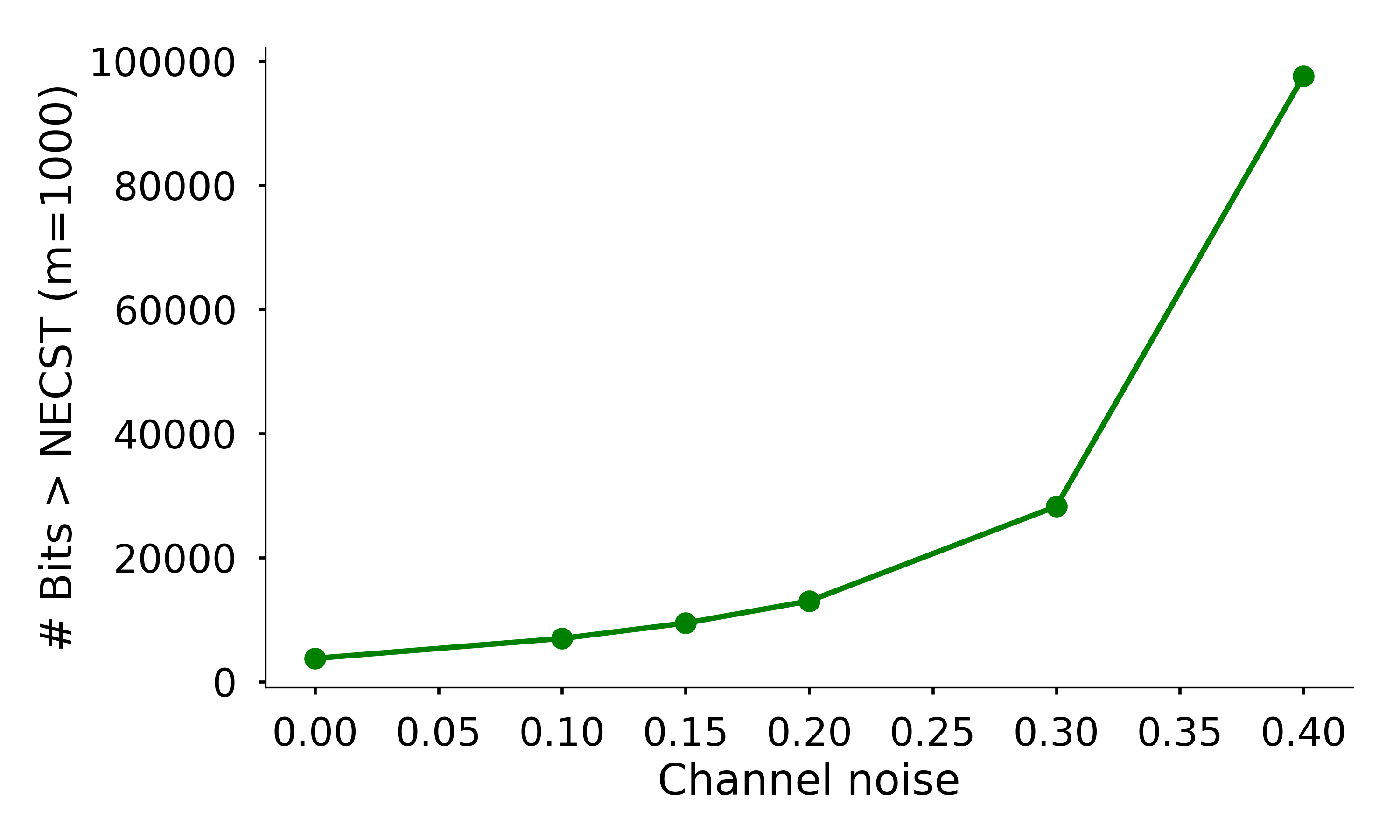}}
\subfigure[SVHN]{\label{fig:a}\includegraphics[width=0.33\textwidth]{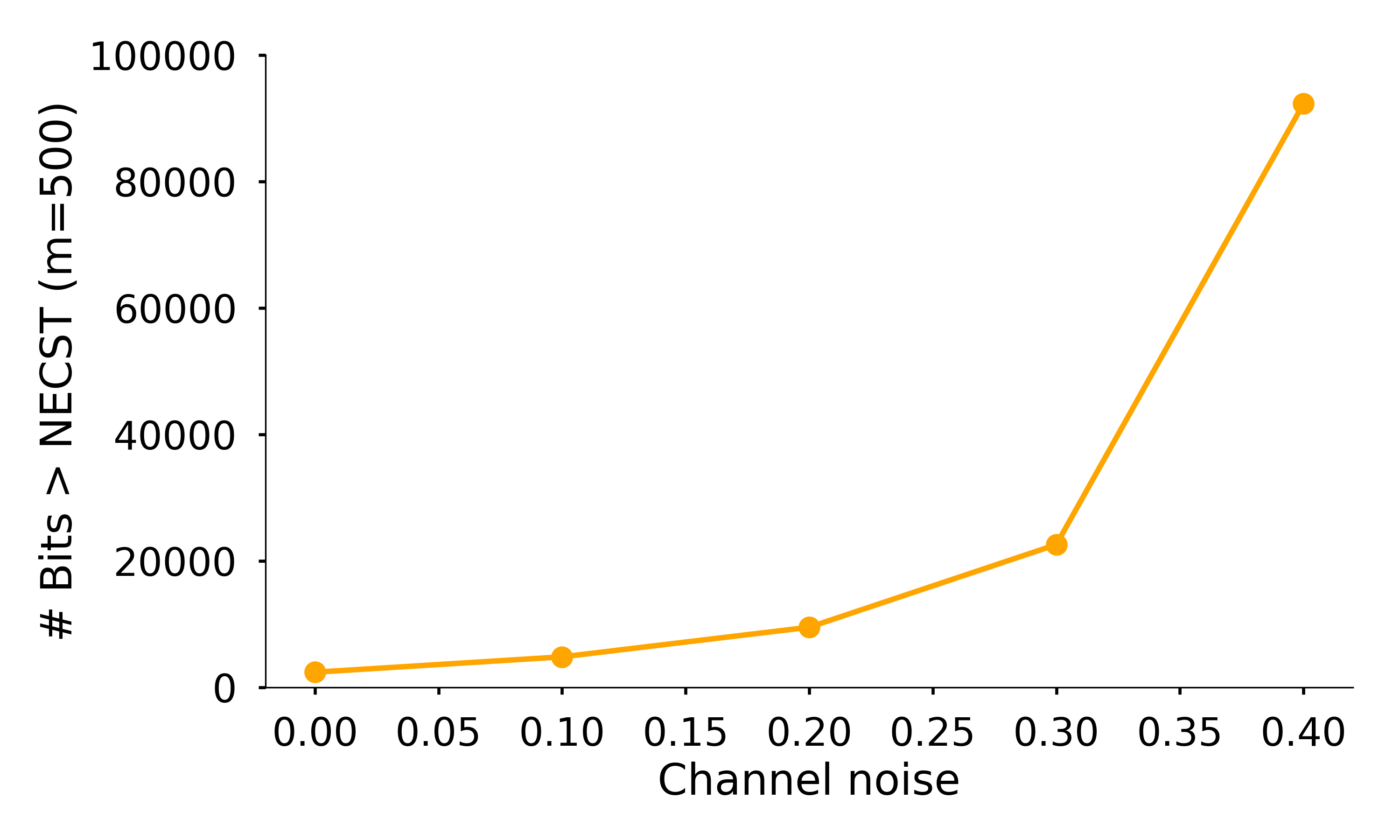}}
\caption{For each dataset, we fix the number of bits $m$ for \modelname{} and compute the resulting distortion. Then, we plot the \textit{additional} number of bits the WebP + ideal channel code system would need in theory to match \modelname{}'s distortion at different levels of channel noise. \modelname{}'s ability to jointly source and channel code yields much greater bitlength efficiency as compared to the separation scheme baseline, with the discrepancy becoming more dramatic as we increase the level of channel noise.}
\end{figure*}
% -----------

\modelname{} is also related to the VAE, with two nuanced distinctions. While both models learn a joint distribution over the observations and latent codes, \modelname{}: (1) optimizes a variational lower bound to the \textit{mutual information} $I(X,Y)$ as opposed to the marginal log-likelihood $p(X)$, and (2) \textit{does not posit} a prior distribution $p(Y)$ over the latent variables. Their close relationship is evidenced by a line of work on rate-distortion optimization in the context of VAEs (\citet{balle2016end}, \citet{higgins2016beta}, \citet{zhao2017infovae}, \citet{alemi2017information},\citet{zhao2018lagrangian}), as well as other information-theoretic interpretations of the VAE's information preference (\citet{hinton1993keeping}, \citet{honkela2004variational}, \citet{chen2016variational}). Although \modelname{}'s method of representation learning is also related to that of the Deep Variational Information Bottleneck (VIB) \citep{alemi2016deep}, VIB \textit{minimizes} the MI between the observations and latents while maximizing the MI between the observations and targets in a supervised setting. Also, existing autoencoders are aimed at compression, while for sufficiently large $m$ \modelname{} will attempt to learn lossless compression with added redundancy for error correction. 

Finally, \modelname{} can be seen as a discrete version of the Uncertainty Autoencoder (UAE) (\citet{grover2018uae}). The two models share identical objective functions with two notable differences: For \modelname{}, (1) the latent codes $Y$ are discrete random variables, and (2) the noise model is no longer continuous. The special properties of the continuous UAE carry over directly to its discrete counterpart. \citet{grover2018uae} proved that under certain conditions, the UAE specifies an implicit generative model (\citet{diggle1984monte}, \citet{mohamed2016learning}). We restate their major theorem and extend their results here.

Starting from any data point $x^{(0)} \sim \mathcal{X}$, we define a Markov chain over $\cal{X} \times \cal{Y}$ with the following transitions:
\begin{equation}
y^{(t)} \sim q_{\text{noisy\_enc}}(y|x^{(t)};\phi,\epsilon)\ \ , \ \ x^{(t+1)} \sim p_{\textrm{dec}}(x|y^{(t)};\theta)
\label{eq:markovchain}
\end{equation}
\begin{theorem}
\label{thm:mc}
For a fixed value of $\epsilon>0$, let $\theta^\ast, \phi^\ast$ denote an optimal solution to the \modelname{} objective.
If there exists a $\phi$ such that $q_\phi(x \mid y; \epsilon) = p_{\theta^\ast}(x \mid y)$, 
% For any fixed value of $\phi$ and $\epsilon>0$, suppose that the \modelname{} objective is globally maximized for some choice of $\theta^*$ so that inequality (Eq.~\ref{eq:opt_obj}) is tight. Then
then the Markov Chain (Eq.~\ref{eq:markovchain}) with parameters $\phi^\ast$ and $\theta^\ast$ is ergodic and its stationary distribution is given by 
$p_{\textrm{data}}(x)  q_{\text{noisy\_enc}}(y \mid x;\epsilon, \phi^\ast)$.
\end{theorem}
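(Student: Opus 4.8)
The plan is to port the argument used for the continuous Uncertainty Autoencoder to our discrete, noisy-channel model, in three stages: rewrite the \modelname{} objective so the decoder's role is explicit; show that at an optimal solution the amortized decoder equals the \emph{true} posterior induced by the optimal encoder; and then verify directly that $p_{\textrm{data}}(x)\,q_{\text{noisy\_enc}}(y\mid x;\epsilon,\phi^\ast)$ is invariant for the chain of Eq.~\ref{eq:markovchain}, upgrading invariance to ergodicity via $\epsilon>0$. For the first stage, fix $\phi$ and write $q_\phi(x,y):=p_{\textrm{data}}(x)\,q_{\text{noisy\_enc}}(y\mid x;\epsilon,\phi)$, with $y$-marginal $q_\phi(y)$ and (intractable) posterior $q_\phi(x\mid y;\epsilon)$. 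Adding and subtracting $\log q_\phi(x\mid y;\epsilon)$ inside the expectation in Eq.~\ref{main:objective} gives
\begin{equation}
\mathcal{L}(\phi,\theta)\;=\;-H(X\mid Y;\phi,\epsilon)\;-\;\mathbb{E}_{q_\phi(y)}\big[\KL\big(q_\phi(x\mid y;\epsilon)\,\|\,p_{\textrm{dec}}(x\mid y;\theta)\big)\big],
\end{equation}
which makes explicit the gap in the bound of Eq.~\ref{eq:opt_obj}; hence $\mathcal{L}(\phi,\theta)\le -H(X\mid Y;\phi,\epsilon)=I(X,Y;\phi,\epsilon)-H(X)$, with equality exactly when $p_{\textrm{dec}}(\cdot\mid y;\theta)=q_\phi(\cdot\mid y;\epsilon)$ for $q_\phi(y)$-almost every $y$.

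For the second stage, the hypothesis supplies a $\phi'$ with $q_{\phi'}(x\mid y;\epsilon)=p_{\theta^\ast}(x\mid y)$; substituting $(\phi',\theta^\ast)$ into the identity above kills the $\KL$ term, so $\mathcal{L}(\phi',\theta^\ast)=-H(X\mid Y;\phi',\epsilon)$ and the objective attains the upper envelope $\sup_\psi\big(-H(X\mid Y;\psi,\epsilon)\big)$. Global optimality of $(\phi^\ast,\theta^\ast)$ together with the upper bound from the first stage then forces $\mathcal{L}(\phi^\ast,\theta^\ast)=-H(X\mid Y;\phi^\ast,\epsilon)$, and the equality condition pins down $p_{\theta^\ast}(x\mid y)=q_{\phi^\ast}(x\mid y;\epsilon)$ for $q_{\phi^\ast}(y)$-a.e.\ $y$, i.e.\ $p_{\theta^\ast}(x\mid y)\,q_{\phi^\ast}(y)=q_{\phi^\ast}(x,y)$ as joint laws on $\cX\times\cY$.

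With that identity in hand the third stage is short. Pushing $q_{\phi^\ast}(x,y)$ through one step of Eq.~\ref{eq:markovchain} produces the $x$-marginal $\sum_{y}q_{\phi^\ast}(y)\,p_{\theta^\ast}(x'\mid y)=\sum_{y}q_{\phi^\ast}(x',y)=p_{\textrm{data}}(x')$, and re-encoding $x'$ recreates $q_{\phi^\ast}(x',y')$, so $p_{\textrm{data}}(x)\,q_{\text{noisy\_enc}}(y\mid x;\epsilon,\phi^\ast)$ is invariant. For ergodicity, note that for $\epsilon>0$ each per-bit BSC transition probability $\sigmoid(f_{\phi^\ast}(x_i))(1-2\epsilon)+\epsilon$ lies strictly in $(0,1)$, so every codeword $y\in\{0,1\}^m$ has positive probability under $q_{\text{noisy\_enc}}(\cdot\mid x;\epsilon,\phi^\ast)$, whence the induced transition kernel on $\cX$, $x\mapsto\sum_y q_{\text{noisy\_enc}}(y\mid x;\epsilon,\phi^\ast)\,p_{\theta^\ast}(\cdot\mid y)$, has strictly positive density on $\mathrm{supp}\,p_{\textrm{data}}$; the chain is therefore irreducible on $\mathrm{supp}\,p_{\textrm{data}}$ and aperiodic, so it has a unique stationary distribution, namely the invariant law just exhibited.

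I expect the crux to be the second stage --- promoting ``the optimal decoder coincides with \emph{some} true posterior'' plus joint optimality to ``the optimal decoder coincides with the true posterior of the \emph{optimal} encoder.'' This is where the expressiveness of the decoder class really enters (it is what makes $\mathcal{L}$ reach $\sup_\psi(-H(X\mid Y;\psi,\epsilon))$ at the solution), and one must also check that the almost-everywhere qualifiers from the first stage do not disturb the marginalization in the third stage; by contrast the first and third stages are routine computations.
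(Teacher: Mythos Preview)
Your proposal follows essentially the same route as the paper: the KL decomposition of the objective into $-H(X\mid Y)$ minus an expected KL term, the argument that at optimality the amortized decoder coincides with the true posterior of the optimal encoder, and ergodicity via the strictly positive BSC transition probabilities combined with a Gibbs-chain invariance check. You are in fact more explicit than the paper in stages two and three (the paper simply asserts that the hypothesized $\phi$ must equal $\phi^\ast$ and then invokes ``the Gibbs chain converges''), and you correctly flag the expressiveness of the decoder class as the place where the argument is most delicate --- the paper glosses over this same point.
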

\begin{proof}

We follow the proof structure in \citet{grover2018uae} with minor adaptations. First, we note that the Markov chain defined in Eq.~\ref{eq:markovchain} is ergodic due to the BSC noise model. This is because the BSC defines a distribution over all possible (finite) configurations of the latent code $\mathcal{Y}$, and the Gaussian decoder posits a non-negative probability of transitioning to all possible reconstructions $\mathcal{X}$. Thus given any $(X,Y)$ and $(\mathcal{X}',\mathcal{Y}')$ such that the density $p(\mathcal{X},\mathcal{Y}) > 0$ and $p(\mathcal{X}',\mathcal{Y}') > 0$, the probability density of transitioning $p(\mathcal{X}',\mathcal{Y}'|\mathcal{X}, \mathcal{Y}) > 0$.

Next, we can rewrite the objective in Eq.~\ref{main:objective} as the following:
\begin{align*}
&\mathbb{E}_{p(x,y;\epsilon,\phi)} \left[ \log p_{\text{dec}}(x|y;\theta) \right] = \\
&\mathbb{E}_{q(y; \epsilon,\phi)}\left[\int q(x|y;\phi) \log p_{\text{dec}}(x|y;\theta) \right] =\\
&-H(X|Y;\phi) - \mathbb{E}_{q(y;\epsilon,\phi)}[KL(q(X|y; \phi) || p_{\textrm{dec}}(X|y;\theta)) ]
\end{align*}
% \s{missing a decoder subscript}
Since the KL divergence is minimized when the two argument distributions are equal, we note that for the optimal value of $\theta = \theta^\ast$, if there exists a $\phi$ in the space of encoders being optimized that satisfies  $q_\phi(x \mid y; \epsilon) = p_{\theta^\ast}(x \mid y)$ then $\phi = \phi^\ast$.
% We note that for any value of $\phi$, the optimal value of $\theta = \theta^*$ eliminates the KL-divergence term.
% \s{where do you use the assumption that inequality is tight?}
Then, for any $\phi$, we note that the following Gibbs chain converges to $p(x,y)$ if the chain is ergodic:
\begin{equation}
y^{(t)} \sim q_{\text{noisy\_enc}}(y|x^{(t)};\phi,\epsilon)\ \ , \ \ x^{(t+1)} \sim p(x|y^{(t)};\theta^*)
\end{equation}
Substituting $p(x|y;\theta^\ast)$ for $p_{\textrm{dec}}(x|y;\theta)$ finishes the proof.
\end{proof}
Hence \modelname{} has an intractable likelihood but can generate samples from $p_{\textrm{data}}$ by running the Markov Chain above. Samples obtained by running the chain for a trained model, initialized from both random noise and test data, can be found in the Supplementary Material. Our results indicate that although we do not meet the theorem's conditions in practice, we are still able to learn a reasonable model.
% \s{resulsts indicate that even though we do not meet the conditions of the theorem in practice, ... we are learning a reasonable model..}
% \js{Is $Q_\theta(X, Y)$ defined anywhere? also I am not exactly sure if this is super relevant to the method, unless you used MC in experiments? okay I see experiments, but have you cited Bengio's generative stochastic networks? just to prevent pissing them off, maybe also cite variational walkback, and jascha's 2015 ICML paper on MC-style generative models if you have space.}
\section{Experimental Results}
\label{exp_res}
  
We first review the optimization challenges of training discrete latent variable models and elaborate on our training procedure. Then to validate our work, we first assess \modelname{}'s compression and error correction capabilities against a combination of two widely-used compression (WebP, VAE) and channel coding (ideal code, LDPC (\citet{gallager1962low})) algorithms. We experiment on randomly generated length-100 bitstrings, MNIST (\citet{lecun1998mnist}), Omniglot (\citet{lake2015human}), Street View Housing Numbers (SVHN) (\citet{netzer2011reading}), CIFAR10 (\citet{krizhevsky2009learning}), and CelebA (\citet{liu2015faceattributes}) datasets to account for different image sizes and colors.
Next, we test the decoding speed of \modelname{}'s decoder and find that it performs upwards of \emph{an order of magnitude faster} than standard decoding algorithms based on iterative belief propagation (and two orders of magnitude on GPU).
Finally, we assess the quality of the latent codes after training, and examine interpolations in latent space as well as how well the learned features perform for downstream classification tasks.

\subsection{Optimization Procedure}
\label{sec:discreteopt}
Recall the \modelname{} objective in equation \ref{main:objective}.
While obtaining Monte Carlo-based gradient estimates with respect to $\theta$ is easy, gradient estimation with respect to $\phi$ is challenging because these parameters specify the Bernoulli random variable $q_{\textrm{noisy\_enc}}(y|x;\epsilon, \phi)$. The commonly used \textit{reparameterization trick} cannot be applied in this setting, as the discrete stochastic unit in the computation graph renders the overall network non-differentiable (\citet{schulman2015gradient}).

% -------------
% putting this figure here for placing purposes
\begin{figure*}[h]
\begin{minipage}{0.6\textwidth}
\begin{tabular}{ccccccc}
\textbf{Binary MNIST} & 0.0 & 0.1 & 0.2 & 0.3 & 0.4 & 0.5\\ 
\hline
100-bit VAE+LDPC & 0.047 & 0.064 & 0.094 & 0.120 & 0.144 & 0.150\\
100-bit \modelname{} & \textbf{0.039} & \textbf{0.052} & \textbf{0.066} & \textbf{0.091} & \textbf{0.125} & \textbf{0.131} \\[1ex]

\textbf{Binary Omniglot} & 0.0 & 0.1 & 0.2 & 0.3 & 0.4 & 0.5\\
\hline
200-bit VAE+LDPC & 0.048 & 0.058 & 0.076 & 0.092 & 0.106 & 0.100\\
200-bit \modelname{} & \textbf{0.035} & \textbf{0.045} & \textbf{0.057} & \textbf{0.070} & \textbf{0.080} & \textbf{0.080} \\[1ex]

\textbf{Random bits} & 0.0 & 0.1 & 0.2 & 0.3 & 0.4 & 0.5\\ 
\hline
50-bit VAE+LDPC & 0.352 & 0.375 & 0.411 & \textbf{0.442} & \textbf{0.470} & 0.502\\
50-bit \modelname{} & \textbf{0.291} & \textbf{0.357} & \textbf{0.407} & 0.456 & 0.500 & \textbf{0.498}
\end{tabular}
    \end{minipage}
  \begin{minipage}{0.3\textwidth}
  \includegraphics[width=70mm]{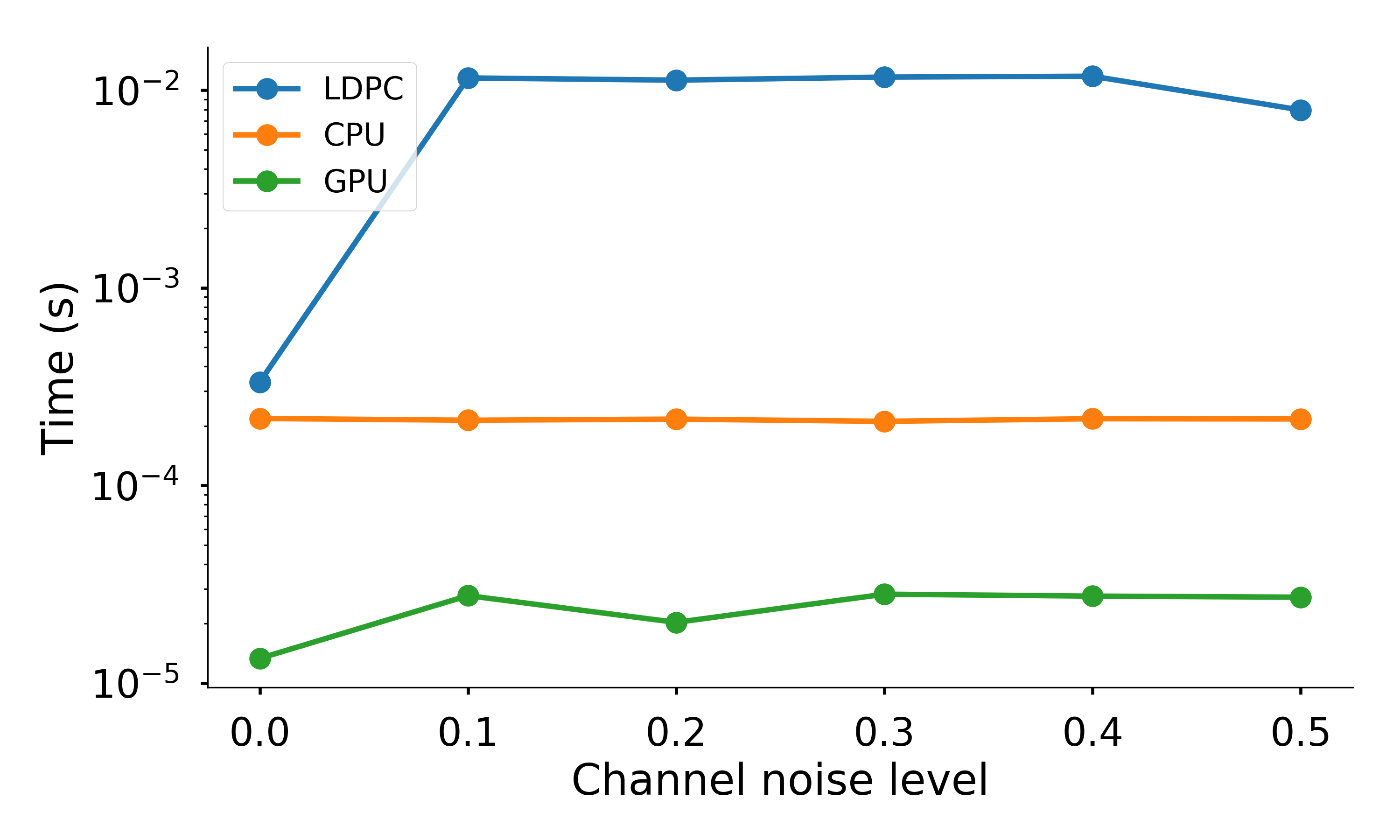}
\end{minipage}
    \caption{(Left) Reconstruction error of \modelname{} vs. VAE + rate-1/2 LDPC codes. We observe that NECST outperforms the baseline in almost all settings except for the random bits, which is expected due to its inability to leverage any underlying statistical structure. (Right) Average decoding times for \modelname{} vs. 50 iterations of LDPC decoding on Omniglot. One forward pass of NECST provides 2x orders of magnitude in speedup on GPU for decoding time when compared to a traditional algorithm based on iterative belief propagation.}
  \end{figure*}
% -------------

An alternative is to use the \textit{score function estimator} in place of the gradient, as defined in the REINFORCE algorithm (\citet{williams1992simple}). However, this estimator suffers from high variance, and several others have explored different formulations and control variates to mitigate this issue (\citet{wang2013variance}, \citet{gu2015muprop}, \citet{ruiz2016generalized}, \citet{tucker2017rebar}, \citet{grathwohl2017backpropagation}, \citet{grover2018variational}). Others have proposed a continuous relaxation of the discrete random variables, as in the Gumbel-softmax (\citet{jang2016categorical}) and Concrete (\citet{maddison2016concrete}) distributions.

Empirically, we found that using that using a continuous relaxation of the discrete latent features led to worse performance at test time when the codes were forced to be discrete. Therefore, we used VIMCO (\citet{mnih2016variational}), a multi-sample variational lower bound objective for obtaining low-variance gradients. VIMCO constructs leave-one-out control variates using its samples, as opposed to the single-sample objective NVIL (\citet{mnih2014neural}) which requires learning additional baselines during training. Thus, we used the 5-sample VIMCO objective in subsequent experiments for the optimization procedure, leading us to our final multi-sample ($K=5$) objective:
\begin{equation}
\begin{split}
\label{main:vimco}
&\mathcal{L}^K(\phi, \theta; x, \epsilon) = \\
&\max_{\theta, \phi} \sum_{x \in \cal{D}} \mathbb{E}_{y^{1:K} \sim q_{\text{noisy\_enc}}(y \mid x;\epsilon, \phi)}\left[\log \frac{1}{K} \sum_{i=1}^K p_{\textrm{dec}}(x|y^{i};\theta) \right]
\end{split}
\end{equation}
  
\subsection{Fixed distortion: WebP + Ideal channel code}
In this experiment, we compare the performances of: (1) \modelname{} and (2) WebP + ideal channel code in terms of compression on 3 RGB datasets: CIFAR-10, CelebA, and binarized SVHN. We note that before the comparing our model with WebP, we remove the headers from the compressed images for a fair comparison.

Specifically, we fix the number of bits $m$ used by \modelname{} to source and channel code, and obtain the corresponding distortion levels (reconstruction errors) at various noise levels $\epsilon$. For fixed $m$, distortion will increase with $\epsilon$. Then, for each noise level we estimate the number of bits an alternate system using WebP and an ideal channel code --- the best that is theoretically possible --- would have to use to match \modelname{}'s distortion levels. As WebP is a variable-rate compression algorithm, we compute this quantity for each image, then average across the entire dataset. Assuming an ideal channel code implies that all messages will be transmitted across the noisy channel at the highest possible communication rate (i.e. the channel capacity); thus the resulting distortion will only be a function of the compression mechanism. We use the well-known channel capacity formula for the BSC channel $C = 1 - H_b(\epsilon)$ to compute this estimate, where $H_b(\cdot)$ denotes the binary entropy function.

We find that \modelname{} excels at compression. Figure 1 shows that in order to achieve the same level of distortion as \modelname{}, the WebP-ideal channel code system requires a much greater number bits \emph{across all noise levels}. In particular, we note that \modelname{} is slightly more effective than WebP at pure compression ($\epsilon=0$), and becomes significantly better at higher noise levels (e.g., for $\epsilon=0.4$, \modelname{} requires $20\times$ less bits).

% ------------
% putting figure here for placing purposes
\begin{figure*}[h]
\centering     %%% not \center
\subfigure[MNIST: 6 $\rightarrow$ 3 leads to an intermediate 5]{\label{fig:a}\includegraphics[width=80mm]{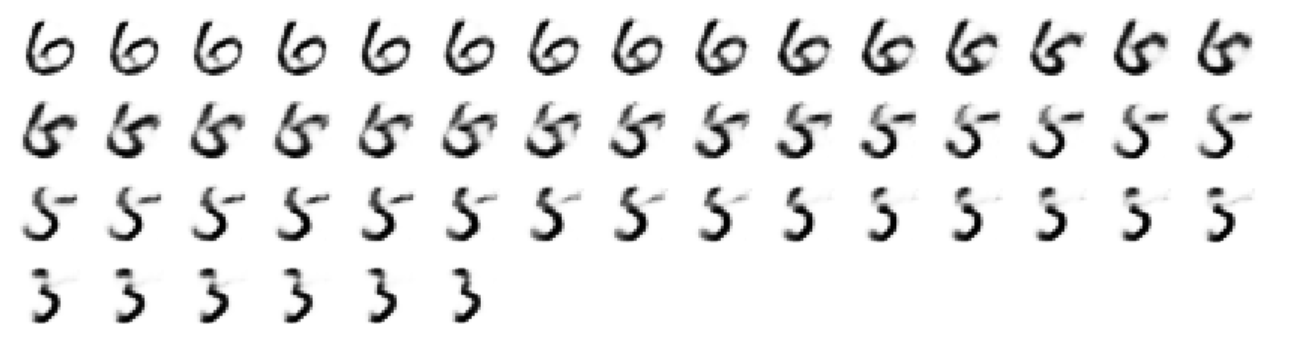}}
\subfigure[MNIST: 8 $\rightarrow$ 4 leads to an intermediate 2 and 9]{\label{fig:b}\includegraphics[width=85mm]{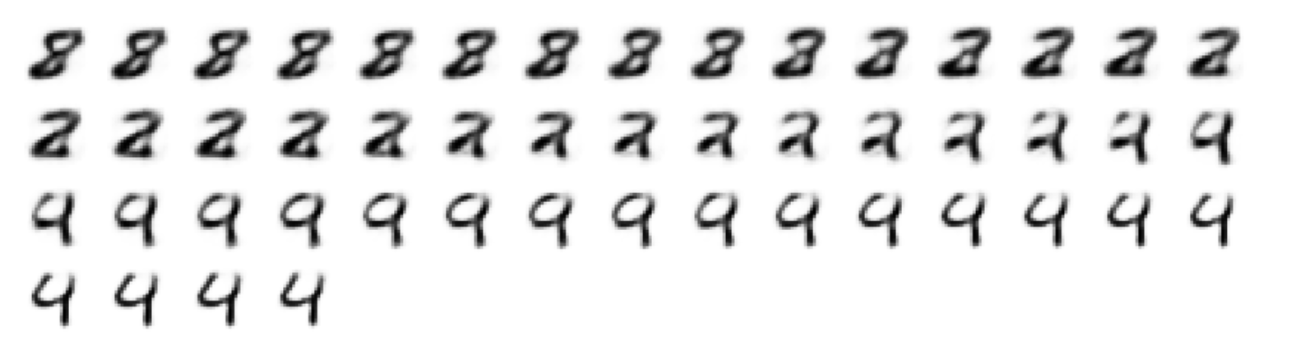}}
\caption{Latent space interpolation, where each image is obtained by randomly flipping one additional latent bit and passing through the decoder. We observe that the NECST model has indeed learned to encode redundancies in the compressed representations of the images, as it takes a significant number of bit flips to alter the original identity of the digit.}
\end{figure*}
% ------

\subsection{Fixed Rate: source coding + LDPC}
Next, we compare the performances of: (1) \modelname{} and (2) discrete VAE + LDPC in terms of distortion. Specifically, we fix the bit-length budget for both systems and evaluate whether learning compression and error-correction jointly (\modelname{}) improves reconstruction errors (distortion) for the same rate. We do not report a comparison against JPEG + LDPC as we were often unable to obtain valid JPEG files to compute reconstruction errors after imperfect LDPC decoding. However, as we have seen in the JPEG/WebP + ideal channel code experiment, we can reasonably conclude that the JPEG/WebP-LDPC system would have displayed worse performance, as \modelname{} was superior to a setup with the best channel code theoretically possible.

We experiment with a discrete VAE with a uniform prior over the latent codes for encoding on binarized MNIST and Omniglot, as these datasets are settings in which VAEs are known to perform favorably. We also include the random bits dataset to assess our channel coding capabilities, as the dataset lacks any statistical structure that can be exploited at source coding time. To fix the total number of bits that are transmitted through the channel, we double the length of the VAE encodings with a rate-1/2 LDPC channel code to match \modelname{}'s latent code dimension $m$. We found that fixing the number of LDPC bits and varying the number of bits in the VAE within a reasonable interval did not result in a significant difference. We note from Figure 2a that although the resulting distortion is similar, \modelname{} outperforms the VAE across all noise levels for image datasets. These results suggest that \modelname{}'s learned coding scheme performs at least as well as LDPCs, an industrial-strength, widely deployed class of error correcting codes.

\subsection{Decoding Time}
Another advantage of \modelname{} over traditional channel codes is that after training, the amortized decoder can very efficiently map the transmitted code into its best reconstruction at test time. Other sparse graph-based coding schemes such as LDPC, however, are more time-consuming because decoding involves an NP-hard optimization problem typically approximated with multiple iterations of belief propagation.

To compare the speed of our neural decoder to LDPC's belief propagation, we fixed the number of bits that were transmitted across the channel and averaged the total amount of time taken for decoding across ten runs. The results are shown below in Figure 2b for the statically binarized Omniglot dataset. On CPU, \modelname{}'s decoder averages an order of magnitude faster than the LDPC decoder running for 50 iterations, which is based on an optimized C implementation. On GPU, \modelname{} displays \emph{two orders of magnitude in speedup}. We also conducted the same experiment without batching, and found that NECST still remains an order of magnitude faster than the LDPC decoder.

\section{Robust Representation Learning}
% TODO: put in table!!!

\begin{table*}[t!]
\small
\caption{Classification accuracy on MNIST/noisy MNIST using 100-bit features from \modelname{}. 
%Numbers to the right of the slash denote results on noisy MNIST.
}
\begin{center}
\begin{tabular}{c|c|c|c|c|c|c|c|c}
\hline
Noise $\epsilon$ & KNN & DT & RF & MLP & AdaB & NB & QDA & SVM\\ 
\hline
0 & \textbf{0.95}/\textbf{0.86} & 0.65/0.54 & 0.71/0.59 & 0.93/0.87 & 0.72/0.65 & 0.75/0.65 & 0.56/0.28 & 0.88/0.81\\ 
\hline
0.1 & 0.95/0.86 & 0.65/0.59 & 0.74/0.65 & \textbf{0.934}/0.88 & 0.74/0.72 & 0.83/0.77 & \textbf{0.94}/\textbf{0.90} & 0.92/0.84\\ 
\hline
0.2 & 0.94/0.86 & \textbf{0.78}/\textbf{0.69} & \textbf{0.81}/\textbf{0.76} & 0.93/\textbf{0.89} & \textbf{0.78}/\textbf{0.80} & \textbf{0.87}/\textbf{0.81} & 0.93/0.90 & \textbf{0.93}/\textbf{0.86}\\ 
\hline
\end{tabular}
\end{center}
\end{table*}
In addition to its ability to source and channel code effectively, \modelname{} is also an implicit generative model that yields robust and interpretable latent representations and realistic samples from the underlying data distribution. Specifically, in light of Theorem \ref{thm:mc}, we can think of $q_{\text{enc}}(\widehat{y}|x; \phi)$ as mapping images $x$ to latent representations $\widehat{y}$.

\subsection{Latent space interpolation}
To assess whether the model has: (1) injected redundancies into the learned codes and (2) learned interesting features, we interpolate between different data points in latent space and qualitatively observe whether the model captures semantically meaningful variations in the data. We select two test points to be the start and end, sequentially flip one bit at a time in the latent code, and pass the altered code through the decoder to observe how the reconstruction changes. In Figure 3, we show two illustrative examples from the MNIST digits. From the starting digit, each bit-flip slowly alters characteristic features such as rotation, thickness, and stroke style until the digit is reconstructed to something else entirely. We note that because \modelname{} is trained to encode redundancies, we do not observe a drastic change per flip. Rather, it takes a significant level of corruption for the decoder to interpret the original digit as another. Also, due to the i.i.d. nature of the channel noise model, we observe that the sequence at which the bits are flipped do not have a significant effect on the reconstructions. Additional interpolations may be found in the Supplementary Material.

\subsection{Downstream classification}
To demonstrate that the latent representations learned by \modelname{} are useful for downstream classification tasks, we extract the binary features and train eight different classification algorithms: $k$-nearest neighbors (KNN), decision trees (DT), random forests (RF), multilayer perceptron (MLP), AdaBoost (AdaB), Quadratic Discriminant Analysis (QDA), and support vector machines (SVM). As shown on the leftmost numbers in Table 1, the simple classifiers perform reasonably well in the digit classification task for MNIST ($\epsilon=0$). We observe that with the exception of KNN, all other classifiers achieve higher levels of accuracy when using features that were trained with simulated channel noise ($\epsilon = 0.1, 0.2$). 

To further test the hypothesis that training with simulated channel noise yields \textit{more robust} latent features, we freeze the pre-trained \modelname{} model and evaluate classification accuracy using a "noisy" MNIST dataset. We synthesized noisy MNIST by adding $\epsilon \sim \mathcal{N}(0, 0.5)$ noise to all pixel values in MNIST, and ran the same experiment as above. As shown in the rightmost numbers of Table 1, we again observe that most algorithms show improved performance with added channel noise. Intuitively, when the latent codes are corrupted by the channel, the codes will be "better separated" in latent space such that the model will still be able to reconstruct accurately despite the added noise. Thus \modelname{} can be seen as a "denoising autoencoder"-style method for learning more robust latent features, with the twist that the noise is injected into the latent space.

% --------------
% more complicated version of table
% {\small
% \begin{table}[!ht]
% \caption{Classification accuracy on MNIST using 100-bit features from \modelname{}.}
% \begin{center}
% \begin{tabular}{c|c|c|c|c|c|c|c|c}
% \hline
% Noise ($\epsilon$) & KNN & DT & RF & MLP & AdaB & NB & QDA & SVM\\ 
% \hline
% 0 & \textbf{0.9526} & 0.6478 & 0.7065 & 0.9306 & 0.7179 & 0.7519 & 0.5632 & 0.8971\\ 
% \hline
% 0.1 & 0.9515 & 0.6478 & 0.7379 & \textbf{0.9339} & 0.7397 & 0.8264 & \textbf{0.9395} & 0.9154\\ 
% \hline
% 0.2 & 0.9442 & \textbf{0.7798} & \textbf{0.8122} & 0.933 & \textbf{0.7794} & \textbf{0.8688} & 0.9289 & \textbf{0.9349}\\ 
% \hline
% \end{tabular}
% \end{center}
% \end{table}
% }
% --------------
\section{Related work}
There has been a recent surge of work applying deep learning and generative modeling techniques to lossy image compression, many of which compare favorably against industry standards such as JPEG, JPEG2000, and WebP (\citet{toderici2015variable}, \citet{balle2016end}, \citet{toderici2017full}). \citet{theis2017lossy} use compressive autoencoders that learn the optimal number of bits to represent images based on their pixel frequencies. \citet{balle2018variational} use a variational autoencoder (VAE) (\citet{kingma2013auto} \citet{rezende2014stochastic}) with a learnable scale hyperprior to capture the image's partition structure as side information for more efficient compression. \citet{santurkar2017generative} use adversarial training (\citet{goodfellow2014generative}) to learn neural codecs for compressing images and videos using DCGAN-style ConvNets (\citet{radford2015unsupervised}). Yet these methods focus on source coding only, and do not consider the setting where the compression must be robust to channel noise.

In a similar vein, there has been growing interest on leveraging these deep learning systems to sidestep the use of hand-designed codes. Several lines of work train neural decoders based on known coding schemes, sometimes learning more general decoding algorithms than before (\citet{nachmani2016learning}, \citet{gruber2017deep}, \citet{cammerer2017scaling}, \citet{dorner2017deep}). (\citet{kim2018deepcode}, \citet{kim2018communication}) parameterize sequential codes with recurrent neural network (RNN) architectures that achieve comparable performance to capacity-approaching codes over the additive white noise Gaussian (AWGN) and bursty noise channels. However, source coding is out of the scope for these methods that focus on learning good channel codes.

The problem of end-to-end transmission of structured data, on the other hand, is less well-studied. \citet{zarcone2018joint} utilize an autoencoding scheme for data storage. \citet{farsad2018deep} use RNNs to communicate text over a BEC and are able to preserve the words' semantics. The most similar to our work is that of \citet{bourtsoulatze2018deep}, who use autoencoders for transmitting images over the AWGN and slow Rayleigh fading channels, which are continuous. We provide a holistic treatment of various discrete noise models and show how \modelname{} can also be used for unsupervised learning. While there is a rich body of work on using information maximization for representation learning (\citet{chen2016infogan}, \citet{hu2017learning}, \citet{hjelm2018learning}), these methods do not incorporate the addition of discrete latent noise in their training procedure.
\section{Conclusion}
\label{disc}
We described how \modelname{} can be used to learn an efficient joint source-channel coding scheme by simulating a noisy channel during training. We showed that the model: (1) is competitive against a combination of WebP and LDPC codes on a wide range of datasets, (2) learns an extremely fast neural decoder through amortized inference, and (3) learns a latent code that is not only robust to corruption, but also useful for downstream tasks such as classification.

One limitation of \modelname{} is the need to train the model separately for different code-lengths and datasets; in its current form, it can only handle fixed-length codes. Extending the model to streaming or adaptive learning scenarios that allow for learning variable-length codes is an exciting direction for future work. Another direction would be to analyze the characteristics and properties of the learned latent codes under different discrete channel models. We provide reference implementations in Tensorflow \citep{abadi2016tensorflow}, and the codebase for this work is open-sourced at \texttt{https://github.com/ermongroup/necst}.
\section*{Acknowledgements}
\label{acks}
We are thankful to Neal Jean, Daniel Levy, Rui Shu, and Jiaming Song for insightful discussions and feedback on early drafts. KC is supported by the NSF GRFP and Stanford Graduate Fellowship, and AG is supported by the MSR Ph.D. fellowship, Stanford Data Science scholarship, and Lieberman fellowship. This research was funded by NSF (\#1651565, \#1522054, \#1733686), ONR (N00014-19-1-2145), AFOSR (FA9550-19-1-0024), and Amazon AWS. 

\bibliography{references}
\bibliographystyle{icml2019}

\raggedbottom
\pagebreak
\pagebreak
% NOTE: submit this section separately!!!!
\section*{Supplementary Material}
\renewcommand\thesection{\Alph{section}}
\setcounter{section}{0}

\section{\modelname{} architecture and hyperparameters}
\subsection{MNIST}
For MNIST, we used the static binarized version as provided in (\citet{burda2015importance}) with train/validation/test splits of 50K/10K/10K respectively.
\begin{itemize}
    \item encoder: MLP with 1 hidden layer (500 hidden units), ReLU activations
    \item decoder: 2-layer MLP with 500 hidden units each, ReLU activations. The final output layer has a sigmoid activation for learning the parameters of $p_{\textrm{noisy\_enc}}(y|x;\phi,\epsilon)$
    \item n\_bits: 100
    \item n\_epochs: 200
    \item batch size: 100
    \item L2 regularization penalty of encoder weights: 0.001
    \item Adam optimizer with lr=0.001
\end{itemize}

\subsection{Omniglot}
We statically binarize the Omniglot dataset by rounding values above 0.5 to 1, and those below to 0. The architecture is the same as that of the MNIST experiment.
\begin{itemize}
    \item n\_bits: 200
    \item n\_epochs: 500
    \item batch size: 100
    \item L2 regularization penalty of encoder weights: 0.001
    \item Adam optimizer with lr=0.001
\end{itemize}

\subsection{Random bits}
We randomly generated length-100 bitstrings by drawing from a $\textrm{Bern}(0.5)$ distribution for each entry in the bitstring. The train/validation/test splits are: 5K/1K/1K. The architecture is the same as that of the MNIST experiment.
\begin{itemize}
    \item n\_bits: 50
    \item n\_epochs: 200
    \item batch size: 100
    \item L2 regularization penalty of encoder weights: 0.001
    \item Adam optimizer with lr=0.001
\end{itemize}

\subsection{SVHN}
For SVHN, we collapse the "easier" additional examples with the more difficult training set, and randomly partition 10K of the roughly 600K dataset into a validation set.
\begin{itemize}
    \item encoder: CNN with 3 convolutional layers + fc layer, ReLU activations
    \item decoder: CNN with 4 deconvolutional layers, ReLU activations. 
    \item n\_bits: 500
    \item n\_epochs: 500
    \item batch size: 100
    \item L2 regularization penalty of encoder weights: 0.001
    \item Adam optimizer with lr=0.001
\end{itemize}
The CNN architecture for the encoder is as follows:
\begin{enumerate}
    \item conv1 = n\_filters=128, kernel\_size=2, strides=2, padding="VALID"
    \item conv2 = n\_filters=256, kernel\_size=2, strides=2, padding="VALID"
    \item conv3 = n\_filters=512, kernel\_size=2, strides=2, padding="VALID"
    \item fc = 4*4*512 $\rightarrow$ n\_bits, no activation
\end{enumerate}
The decoder architecture follows the reverse, but with a final deconvolution layer as: n\_filters=3, kernel\_size=1, strides=1, padding="VALID", activation=ReLU.

\subsection{CIFAR10}
We split the CIFAR10 dataset into train/validation/test splits.
\begin{itemize}
    \item encoder: CNN with 3 convolutional layers + fc layer, ReLU activations
    \item decoder: CNN with 4 deconvolutional layers, ReLU activations. 
    \item n\_bits: 500
    \item n\_epochs: 500
    \item batch size: 100
    \item L2 regularization penalty of encoder weights: 0.001
    \item Adam optimizer with lr=0.001
\end{itemize}
The CNN architecture for the encoder is as follows:
\begin{enumerate}
    \item conv1 = n\_filters=64, kernel\_size=3, padding="SAME"
    \item conv2 = n\_filters=32, kernel\_size=3, padding="SAME"
    \item conv3 = n\_filters=16, kernel\_size=3, padding="SAME"
    \item fc = 4*4*16 $\rightarrow$ n\_bits, no activation
\end{enumerate}
Each convolution is followed by batch normalization, a ReLU nonlinearity, and 2D max pooling. 
The decoder architecture follows the reverse, where each deconvolution is followed by batch normalization, a ReLU nonlinearity, and a 2D upsampling procedure. Then, there is a final deconvolution layer as: n\_filters=3, kernel\_size=3, padding="SAME" and one last batch normalization before a final sigmoid nonlinearity.

\subsection{CelebA}
We use the CelebA dataset with standard train/validation/test splits with minor preprocessing. First, we align and crop each image to focus on the face, resizing the image to be $(64, 64, 3)$.
\begin{itemize}
    \item encoder: CNN with 5 convolutional layers + fc layer, ELU activations
    \item decoder: CNN with 5 deconvolutional layers, ELU activations. 
    \item n\_bits: 1000
    \item n\_epochs: 500
    \item batch size: 100
    \item L2 regularization penalty of encoder weights: 0.001
    \item Adam optimizer with lr=0.0001
\end{itemize}
The CNN architecture for the encoder is as follows:
\begin{enumerate}
    \item conv1 = n\_filters=32, kernel\_size=4, strides=2, padding="SAME"
    \item conv2 = n\_filters=32, kernel\_size=4, strides=2, padding="SAME"
    \item conv3 = n\_filters=64, kernel\_size=4, strides=2, padding="SAME"
    \item conv4 = n\_filters=64, kernel\_size=4, strides=2, padding="SAME"
    \item conv5 = n\_filters=256, kernel\_size=4, strides=2, padding="VALID"
    \item fc = 256 $\rightarrow$ n\_bits, no activation
\end{enumerate}
The decoder architecture follows the reverse, but without the final fully connected layer and the last deconvolutional layer as: n\_filters=3, kernel\_size=4, strides=2, padding="SAME", activation=sigmoid.

\section{Additional experimental details and results}
\subsection{WebP/JPEG-Ideal Channel Code System}
For the BSC channel, we can compute the theoretical channel capacity with the formula $C = 1 - H_b(\epsilon)$, where $\epsilon$ denotes the bit-flip probability of the channel and $H_b$ denotes the binary entropy function. 
%The capacity represents the highest communication rate possible, where essentially to transmit an image of size $m$, at least $m/C$ bits need to be used, so that the channel errors can be corrected. 
Note that the communication rate of $C$ is achievable in the asymptotic scenario of infinitely long messages; in the finite bit-length regime, particularly in the case of short blocklengths, the highest achievable rate will be much lower. 
% As we compute the theoretical codeword length with the expression $m = R(d)/C$, this discrepancy will yield an overestimate of the necessary codeword length $m$. 

% For the randomly generated data, we can compute the rate-distortion function $R(d)$ exactly as $R(d) = 1 - H_b(d)$. But as computing $R(d)$ is intractable for images, we estimate it with the resulting file size after compressing with JPEG and (approximately) removing extraneous headers. We call this estimate $f(d)$ and use it in place of $R(d)$ for our image datasets. 
For each image, we first obtain the target distortion $d$ per channel noise level by using a fixed bit-length budget with \modelname{}. Next we use the JPEG compressor to encode the image at the distortion level $d$. The resulting size of the compressed image $f(d)$ is used to get an estimate $f(d)/C$ for the number of bits used for the image representation in the ideal channel code scenario. While measuring the compressed image size $f(d)$, we ignore the header size of the JPEG image, as the header is similar for images from the same dataset. 

The plots compare $f(d)/C$ with $m$, the fixed bit-length budget for \modelname{}.

\begin{figure*}[h]
\centering     %%% not \center
\subfigure[BinaryMNIST]{\label{fig:a}\includegraphics[width=.3\textwidth]{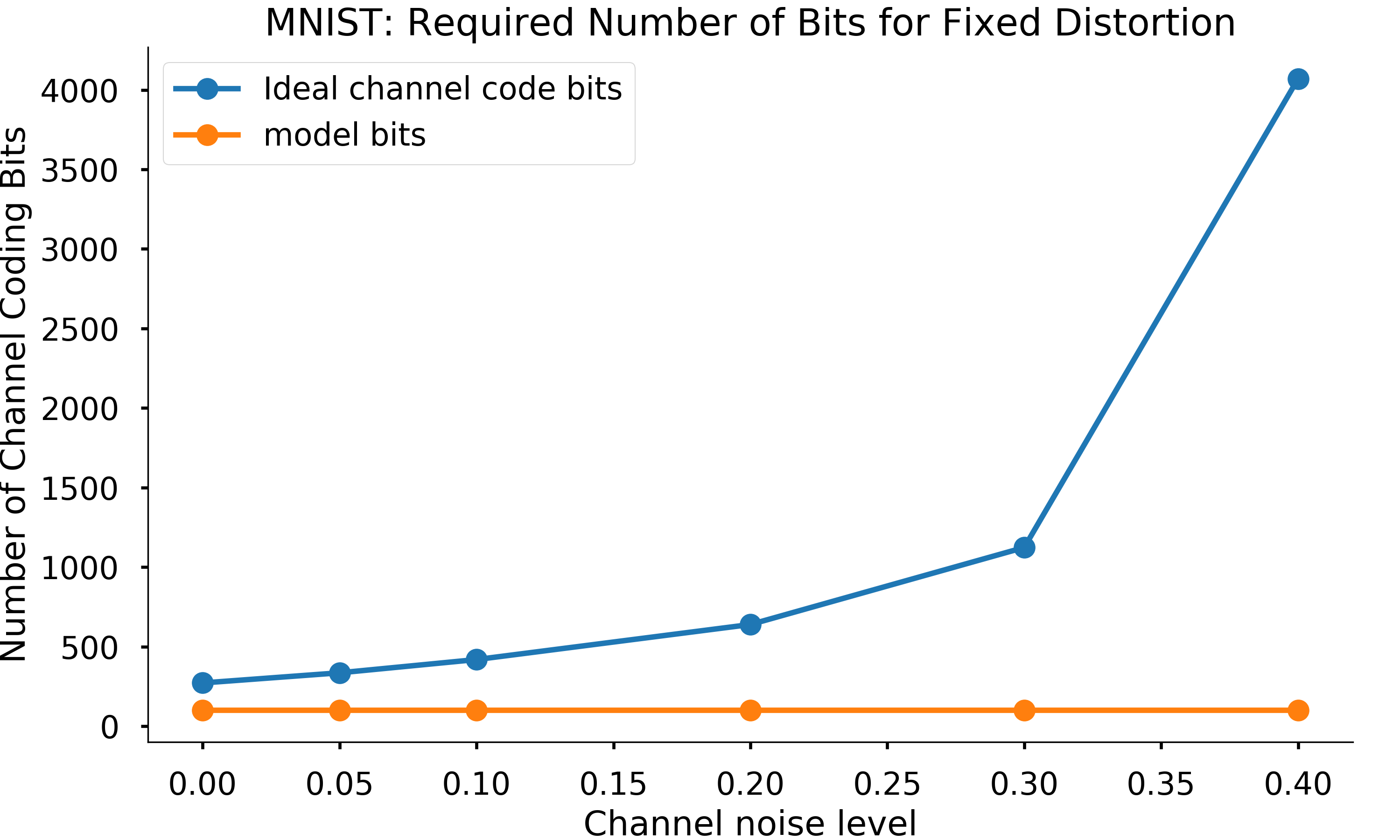}}
\subfigure[Omniglot]{\label{fig:b}\includegraphics[width=.3\textwidth]{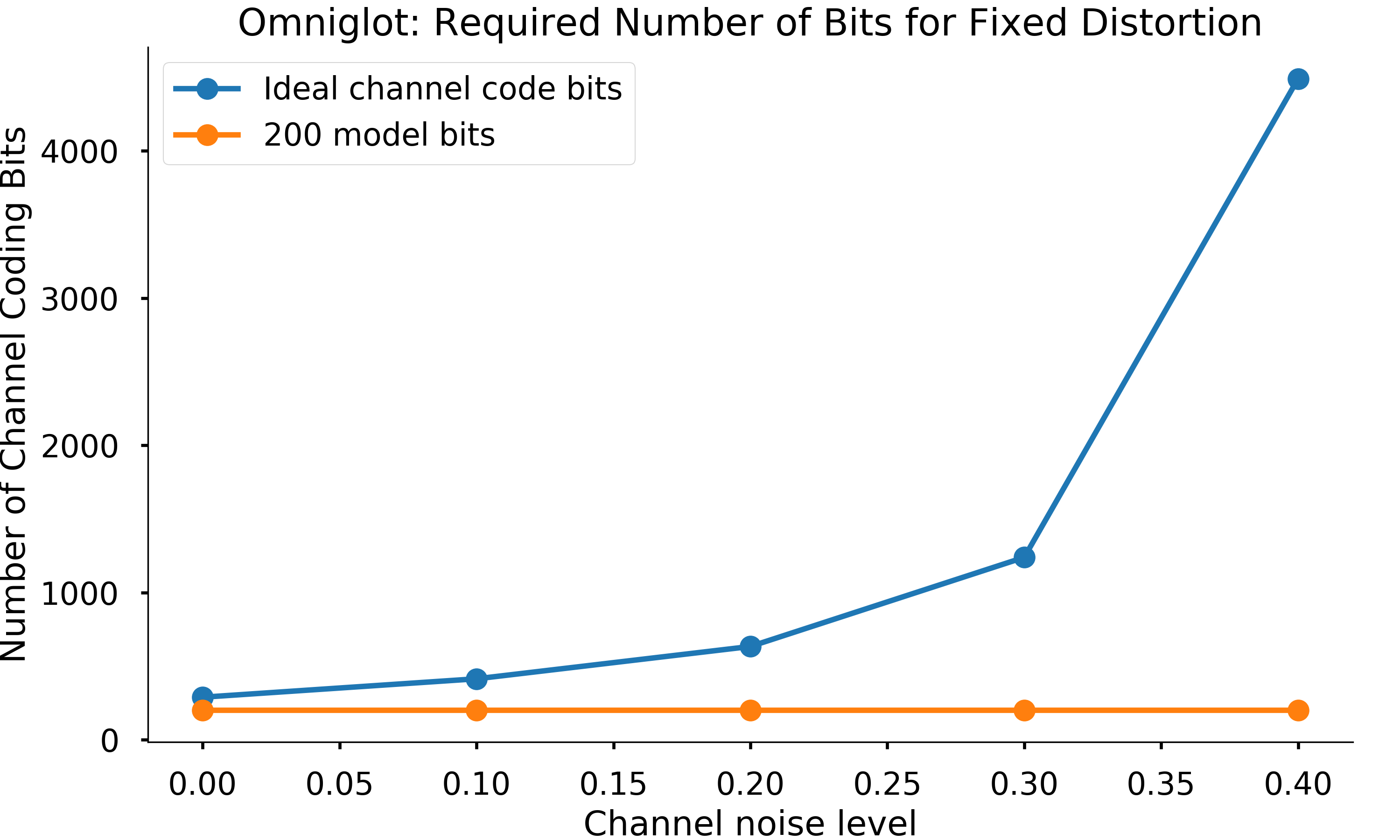}}
\subfigure[random]{\label{fig:c}\includegraphics[width=.3\textwidth]{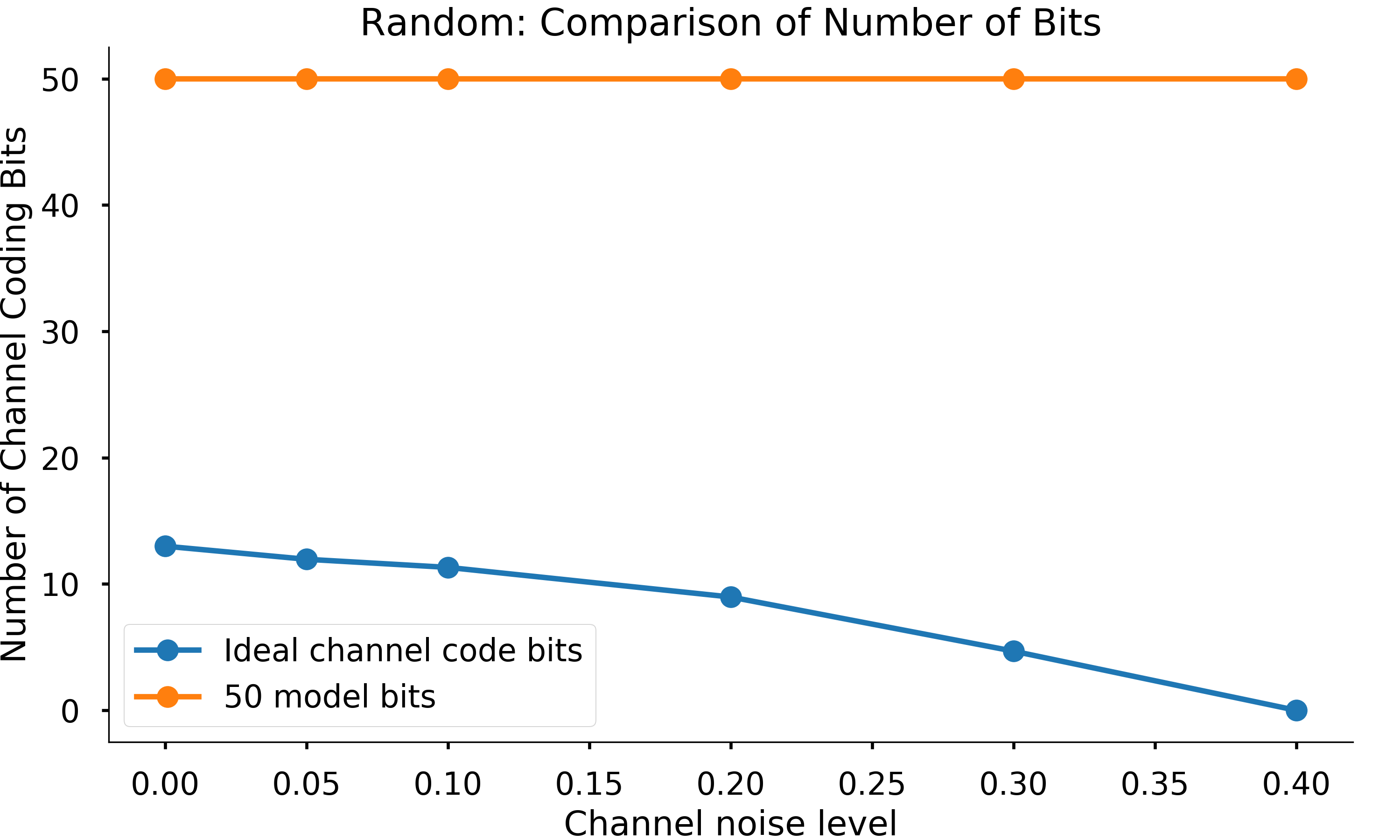}}
\caption{Theoretical $m$ required for JPEG + ideal channel code to match \modelname{}'s distortion.}
\end{figure*}

\subsection{Fixed Rate: JPEG-LDPC system}
We first elaborate on the usage of the LDPC software. We use an optimized C implementation of LDPC codes to run our decoding time experiments:\\
(\texttt{http://radfordneal.github.io/LDPC-codes/}). 
The procedure is as follows:
\begin{itemize}
    \item \texttt{make-pchk} to create a parity check matrix for a regular LDPC code with three 1's per column, eliminating cycles of length 4 (default setting). The number of parity check bits is: \texttt{total number of bits allowed} - \texttt{length of codeword}.
    \item \texttt{make-gen} to create the generator matrix from the parity check matrix. We use the default \texttt{dense} setting to create a dense representation.
    \item \texttt{encode} to encode the source bits into the LDPC encoding
    \item \texttt{transmit} to transmit the LDPC code through a \texttt{bsc} channel, with the appropriate level of channel noise specified (e.g. 0.1)
    \item \texttt{extract} to obtain the actual decoded bits, or \texttt{decode} to directly obtain the bit errors from the source to the decoding.
\end{itemize}
 
LDPC-based channel codes require larger blocklengths to be effective. To perform an end-to-end experiment with the JPEG compression and LDPC channel codes, we form the input by concatenating multiple blocks of images together into a grid-like image. In the first step, the fixed rate of $m$ is scaled by the total number of images combined, and this quantity is used to estimate the target $f(d)$ to which we compress the concatenated images. In the second step, the compressed concatenated image is coded together by the LDPC code into a bit-string, so as to correct for any errors due to channel noise. 

Finally, we decode the corrupted bit-string using the LDPC decoder. The plots compare the resulting distortion of the compressed concatenated block of images with the average distortion on compressing the images individually using \modelname{}. Note that, the experiment gives a slight disadvantage to \modelname{} as it compresses every image individually, while JPEG compresses multiple images together. 

We report the average distortions for sampled images from the test set.

Unfortunately, we were unable to run the end-to-end for some scenarios and samples due to errors in the decoding (LDPC decoding, invalid JPEGs etc.).

\subsection{VAE-LDPC system}
For the VAE-LDPC system, we place a uniform prior over all the possible latent codes and compute the KL penalty term between this prior $p(y)$ and the random variable $q_{\textrm{noisy\_enc}}(y|x;\phi, \epsilon)$. The learning rate, batch size, and choice of architecture are data-dependent and fixed to match those of \modelname{} as outlined in Section C. However, we use half the number of bits as allotted for \modelname{} so that during LDPC channel coding, we can double the codeword length in order to match the rate of our model.

\subsection{Interpolation in Latent Space}
We show results from latent space interpolation for two additional datasets: SVHN and celebA. We used 500 bits for SVHN and 1000 bits for celebA across channel noise levels of [0.0, 0.1, 0.2, 0.3, 0.4, 0.5].
\begin{figure*}[h]
\centering
\includegraphics[width=130mm]{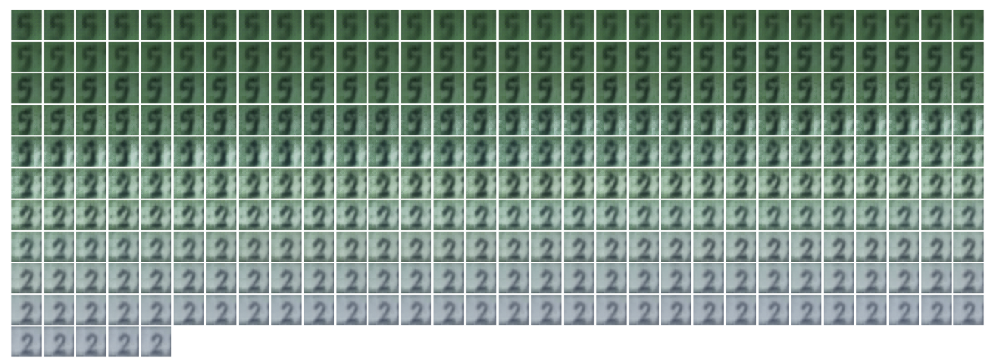}
\caption{Latent space interpolation of 5 $\rightarrow$ 2 for SVHN, 500 bits at noise=0.1}
\end{figure*}
\begin{figure*}[h]
\centering
\includegraphics[width=130mm]{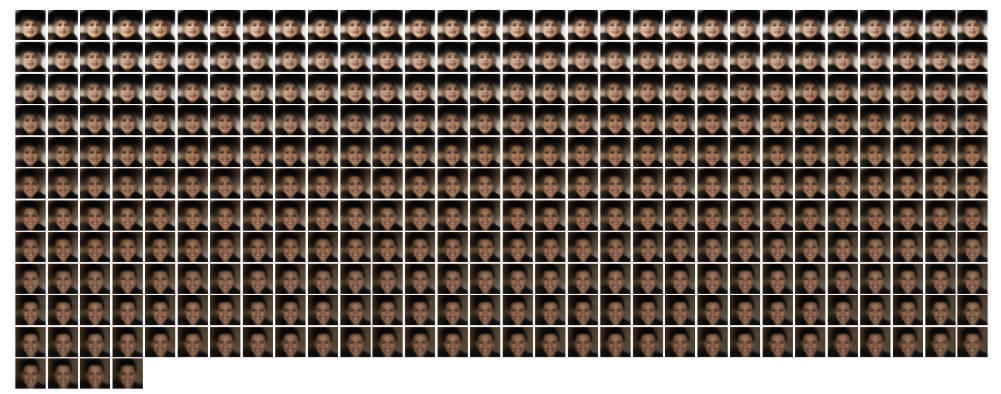}
\caption{Latent space interpolation for celebA, 1000 bits at noise=0.1}
\end{figure*}

\subsection{Markov chain image generation}
We observe that we can generate diverse samples from the data distribution after initializing the chain with both: (1) examples from the test set and (2) random Gaussian noise $x_0 \sim \mathcal{N}(0, 0.5)$. 
\begin{figure*}[h]
\centering     %%% not \center

\subfigure[MNIST, initialized from random noise]{\label{fig:a}\includegraphics[width=43mm]{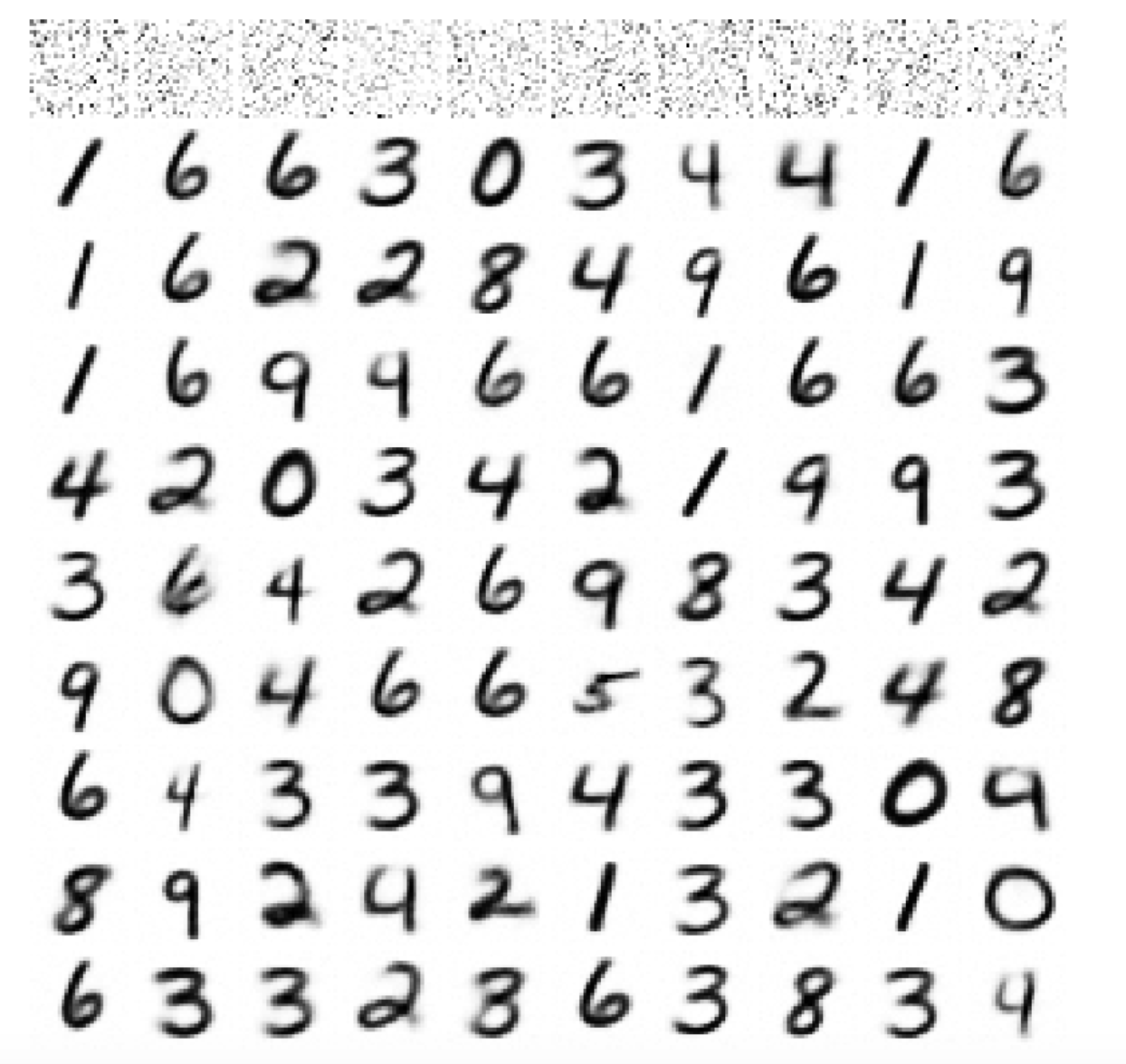}}
\subfigure[celebA, initialized from data]{\label{fig:b}\includegraphics[width=43mm]{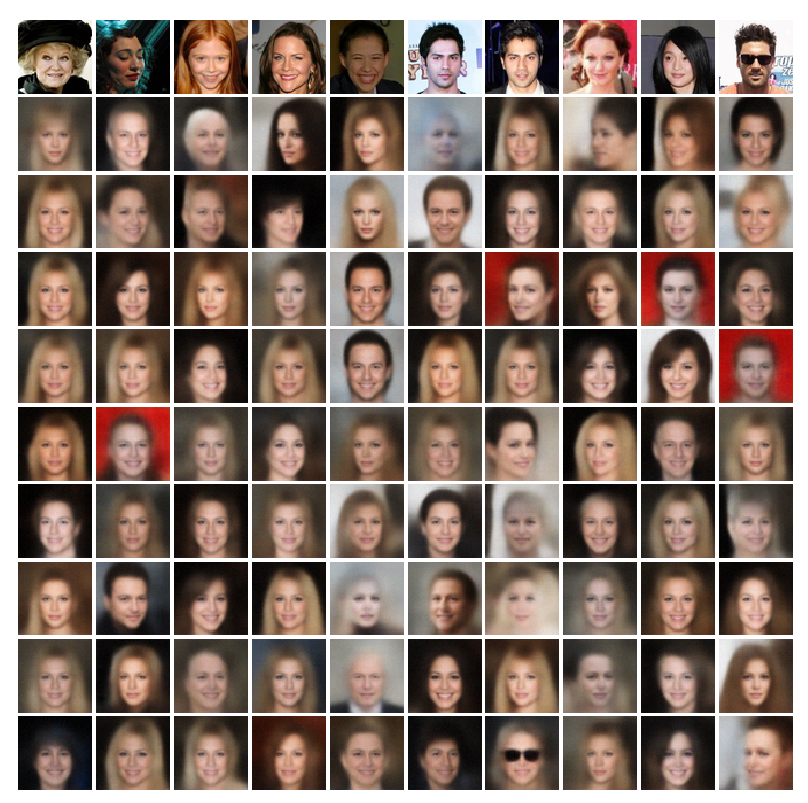}}
\subfigure[SVHN, initialized from data]{\label{fig:b}\includegraphics[width=43mm]{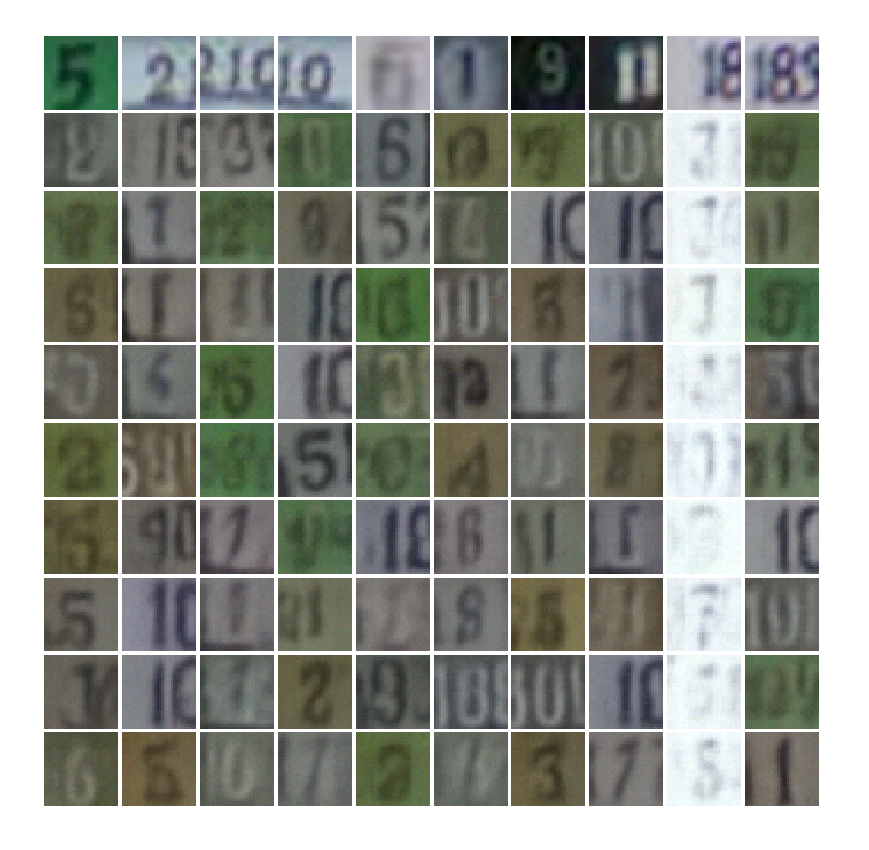}}
\caption{Markov chain image generation after 9000 timesteps, sampled per 1000 steps}
\end{figure*}

\subsection{Downstream classification}
Following the setup of (\citet{grover2018uae}), we used standard implementations in \texttt{sklearn} with default parameters for all 8 classifiers with the following exceptions:
\begin{enumerate}
    \item KNN: \texttt{n\_neighbors=3}
    \item DF:  \texttt{max\_depth=5}
    \item RF: \texttt{max\_depth=5, n\_estimators=10, max\_features=1}
    \item MLP:  \texttt{alpha=1}
    \item SVC: \texttt{kernel=linear, C=0.025}
\end{enumerate}

\end{document}